%% file: main.tex
\pdfoutput=1
\documentclass[10pt,a4paper]{article}
\input{preamble.tex}

\title{\textbf{Permutation-based Reasoning and Intelligence Search Method (PRISM)}\\
\large Predicting When Search Helps with a Pre-Flight Protocol,\\[-0.1em]
and an Exhaustive Map of Language-Model Instruction Order}
\author{Blessings Mambwe\\[-0.1em]\small ML Collective}
\date{August 8, 2026}

\begin{document}
\maketitle
\vspace{-1.2em}

\input{sections/00_abstract.tex}
\input{sections/01_introduction.tex}
\input{sections/01_terminology.tex}
\input{sections/02_protocol.tex}
\input{sections/02_related_work.tex}
\input{sections/03_landscapes.tex}
\input{sections/04_falsification.tex}
\input{sections/05_llm_landscape.tex}
\input{sections/06_transfer_baselines.tex}
\input{sections/07_cross_domain.tex}
\input{sections/08_limitations.tex}
\input{sections/09_conclusion.tex}
\raggedbottom
\input{sections/08_applications.tex}

\smallskip
\noindent\textbf{Acknowledgments.}
The author gratefully acknowledges God for guidance and strength, and thanks Jake Beck for
reviewing the manuscript and providing constructive feedback. This work was conducted within the
ML Collective research community. The author retains responsibility for the analyses and
conclusions.

\label{maintext:end}
\clearpage
\flushbottom
\printbibliography

\clearpage
\appendix
\input{sections/a_theory.tex}
\input{sections/b_additional_results.tex}
\input{sections/c_reproducibility.tex}

\end{document}

%% file: preamble.tex
\usepackage[margin=2.25cm,top=2.15cm,bottom=2.25cm]{geometry}
\usepackage[T1]{fontenc}
\usepackage{lmodern,microtype,mathtools,amssymb,bm,amsthm}
\newtheorem{theorem}{Theorem}[section]
\newtheorem{proposition}[theorem]{Proposition}

\usepackage[ruled,vlined,linesnumbered]{algorithm2e}
\SetKwInput{KwIn}{Input}
\SetKwInput{KwOut}{Output}
\DontPrintSemicolon

\usepackage{float,newfloat,graphicx,xcolor,tikz,pgfplots,subcaption,booktabs,tabularx,enumitem}
\DeclareFloatingEnvironment[
  fileext=lop,
  listname={List of Protocols},
  name=Protocol,
  placement=tbp
]{protocol}

\definecolor{PrismBlue}{HTML}{4477AA}
\definecolor{PrismCyan}{HTML}{66CCEE}
\definecolor{PrismGreen}{HTML}{228833}
\definecolor{PrismYellow}{HTML}{CCBB44}
\definecolor{PrismRed}{HTML}{EE6677}
\definecolor{PrismPurple}{HTML}{AA3377}
\definecolor{PrismOrange}{HTML}{E69F00}
\definecolor{PrismGrey}{HTML}{777777}
\definecolor{PrismLight}{HTML}{F5F8FA}

\usetikzlibrary{arrows.meta,positioning,calc,fit,matrix,shapes.geometric,decorations.pathreplacing}
\usepgfplotslibrary{groupplots,fillbetween}
\pgfplotsset{compat=1.18}

\pgfplotsset{
  prism axis/.style={
    axis line style={black!60},
    tick style={black!60},
    label style={font=\small},
    tick label style={font=\scriptsize},
    legend style={font=\scriptsize,draw=none,fill=none},
    grid=major,
    grid style={black!8},
    scaled ticks=false,
    every axis plot/.append style={line width=1pt}
  }
}

\newcolumntype{L}[1]{>{\raggedright\arraybackslash}p{#1}}
\newcolumntype{Y}{>{\raggedright\arraybackslash}X}

\usepackage[most]{tcolorbox}
\newtcolorbox{takeaway}{
  enhanced,colback=PrismBlue!4,colframe=PrismBlue!65!black,
  boxrule=0.55pt,arc=1.5pt,left=5pt,right=5pt,top=2pt,bottom=2pt,
  fonttitle=\bfseries,fontupper=\small,title={Takeaway},before skip=3pt,after skip=1pt
}

\setlist{nosep,leftmargin=1.45em}
\usepackage{placeins}
\usepackage{xurl,hyperref}
\hypersetup{
  colorlinks=true,
  linkcolor=PrismBlue!80!black,
  citecolor=PrismGreen!60!black,
  urlcolor=PrismBlue!80!black,
  bookmarksnumbered=true,
  pdftitle={Permutation-based Reasoning and Intelligence Search Method (PRISM)},
  pdfauthor={Blessings Mambwe}
}
\PassOptionsToPackage{capitalize,nameinlink,noabbrev}{cleveref}
\PassOptionsToPackage{backend=biber,style=numeric-comp,sorting=none,maxbibnames=99}{biblatex}
\usepackage{cleveref,biblatex}
\crefname{protocol}{Protocol}{Protocols}
\Crefname{protocol}{Protocol}{Protocols}
\crefname{algorithm}{Algorithm}{Algorithms}
\Crefname{algorithm}{Algorithm}{Algorithms}

\DeclareMathOperator*{\argmax}{arg\,max}
\newcommand{\Sn}{\mathbb{S}_n}
\newcommand{\rhoone}{\rho_1}
\newcommand{\fdc}{\mathrm{FDC}}
\newcommand{\fit}{F}
\newcommand{\archiveurl}{\url{https://github.com/bleymambwe/PRISM}}

%% file: sections/00_abstract.tex
\begin{abstract}
PRISM addresses permutation-valued decisions in systems whose components are fixed but whose
order is not: compiler passes,
preprocessing steps, neural modules, or reasoning instructions. Before selecting an optimizer, it
asks whether structured search is likely to beat uniform sampling. PRISM's \emph{pre-flight} uses a
small fixed measurement budget to estimate one-move fitness
autocorrelation and fitness--distance correlation, forecasts the search regime, selects a method,
and records the forecast beside the outcome. On exactly enumerated landscapes, the protocol
predicts its own failures: on a 5,040-ordering neural landscape, evolutionary search finds an
optimum in 19 of 40 runs, versus 30 of 40 for uniform sampling, as near-zero local correlation
forecasts. In the flagship application, all 720 orderings of six fixed reasoning instructions span
6.3\% to 96.9\% accuracy on a GSM8K subset. The position effects transfer across model families
and survive automated rewriting of the instruction text. Cross-domain evidence includes six
scientific-pipeline systems, 18 registered tabular architecture forecasts, and two preliminary
reinforcement-learning ordering surfaces. Code and artifact index: \archiveurl.
\end{abstract}

\begin{figure}[H]
\centering
\begin{tikzpicture}[
  x=0.92cm,y=0.78cm,
  cell/.style={draw,rounded corners=1.5pt,minimum width=0.70cm,minimum height=0.55cm,
    font=\scriptsize\bfseries},
  note/.style={font=\scriptsize,align=left},
  conditions/.style={draw=PrismBlue!55!black,fill=PrismLight,rounded corners=2pt,
    minimum width=4.65cm,minimum height=1.95cm}
]
\foreach \x/\v/\c in {0/R/PrismCyan,1/P/PrismPurple,2/K/PrismGreen,3/A/PrismBlue,4/C/PrismOrange,5/I/PrismYellow}
  \node[cell,fill=\c!22] at (\x,1.0) {\v};
\node[note,anchor=east] at (-0.35,1.0) {worst example};
\node[note,anchor=west,font=\scriptsize\bfseries,text=PrismRed!75!black] at (5.5,1.0) {6.3\%};

\foreach \x/\v/\c in {0/K/PrismGreen,1/C/PrismOrange,2/I/PrismYellow,3/A/PrismBlue,4/R/PrismCyan,5/P/PrismPurple}
  \node[cell,fill=\c!22] at (\x,0.0) {\v};
\node[note,anchor=east] at (-0.35,0.0) {best example};
\node[note,anchor=west,font=\scriptsize\bfseries,text=PrismGreen!60!black] at (5.5,0.0) {96.9\%};

\node[conditions] at (9.2,0.50) {};
\node[font=\scriptsize\bfseries,align=center,text width=4.15cm] at (9.2,1.08)
  {fixed wording and model\\fixed questions and grader};
\draw[-{Stealth[length=5pt]},very thick,PrismBlue] (7.55,0.02)--(10.85,0.02);
\fill[PrismRed] (7.78,0.02) circle (2.3pt);
\fill[PrismGreen] (10.62,0.02) circle (2.3pt);
\node[font=\scriptsize,anchor=south] at (7.78,0.18) {0.063};
\node[font=\scriptsize,anchor=south] at (10.62,0.18) {0.969};
\node[font=\scriptsize,anchor=north] at (9.2,-0.18)
  {all \(6! = 720\) orderings evaluated};
\end{tikzpicture}
\caption{\textbf{Order alone moves measured accuracy by 90.6 percentage points.} The strips show
one worst and one best ordering of RESTATE (R), IDENTIFY (I), PLAN (P), COMPUTE (C), CHECK (K),
and ANSWER (A). Every ordering is evaluated on the same 32 GSM8K questions with fixed wording,
deterministic decoding, and the same grader.}
\label{fig:hook}
\end{figure}

%% file: sections/01_introduction.tex
\section{Introduction}
\label{sec:introduction}

Many systems have already selected the right components but have not justified their order. A
compiler has passes, a scientific pipeline has preprocessing operations, a neural model has
modules, and a prompt has instructions. Once the component multiset is fixed, a candidate is a
permutation \(\pi\in\Sn\), and the optimization problem is
\begin{equation}
  \pi^*\in\argmax_{\pi\in\Sn}\fit(\pi),
  \label{eq:objective}
\end{equation}
where \(\fit\) is the task score. Composition is generally non-commutative: moving a nonlinear
operation, a normalizer, or an instruction changes what later components receive.
The magnitude of this effect in the exhaustive instruction study is previewed in
\cref{fig:hook}.

Fixing the multiset is a genuine restriction on a broader configuration space. If \(n\) ordered
slots draw from \(k\) component types, unrestricted assignment yields \(k^n\) sequences. With
fixed counts \(c_1,\ldots,c_k\), \(\sum_i c_i=n\), the reachable arrangements number
\begin{equation}
  |S(\mathbf c)|=\frac{n!}{c_1!\cdots c_k!}\le k^n,
  \label{eq:multiset-space}
\end{equation}
because they form one subset of the length-\(n\) sequences over the same alphabet. When all
components are distinct, as in the experiments here, \(|S|=n!\). This restriction buys exact
ground truth at small \(n\): six instructions have 720 orderings, compared with \(6^6=46{,}656\)
unrestricted six-slot configurations.

Restriction is not simplification. Cardinality does not determine search difficulty. The exact
parity landscape over \(7!=5{,}040\) orderings defeats elitist evolutionary search, which finds an
optimum in 19 of 40 runs against 30 of 40 for uniform sampling. A matched-operator synthetic
landscape over \(16!\approx2.09\times10^{13}\) orderings is solved in roughly 405 generations. A
five-thousand-point space can be harder than a twenty-trillion-point one. The difference is
structure: whether a search move preserves fitness information and whether better candidates tend
to lie closer to an optimum. Neither property is visible in \(n\), so both must be measured.

In this work, we introduce a pre-flight protocol that spends a small, fixed budget before the main
search. It measures local fitness preservation and global guidance, checks whether the evaluator
can distinguish candidates, forecasts whether structured search should help, and selects an
\emph{executor}---the method that spends the remaining budget. Possible executors include exact
enumeration, uniform sampling, an evolutionary searcher, a surrogate, or a transfer-guided list.
The forecast, rather than any single executor, is the contribution.

Our contributions are four results:
\begin{enumerate}[label=\textbf{C\arabic*.}]
  \item A two-statistic pre-flight separates operator-aligned, near-random, and deceptive regimes
  and turns null results into testable forecasts (\cref{sec:protocol,sec:landscapes}).
  \item Exact enumeration exposes a decisive boundary: structured search can be worse than
  uniform sampling when its neighbourhood contains almost no fitness signal
  (\cref{sec:falsification}).
  \item A complete 720-order map of fixed reasoning instructions reveals a 6.3\%--96.9\% accuracy
  range, transferable position effects, and ordering variation that survives automated wording
  optimization (\cref{sec:llm-landscape,sec:transfer-baselines}).
  \item Cross-domain studies cover six scientific-computing pipelines, registered tabular
  architecture forecasts, and preliminary option-ordering and curriculum-ordering surfaces in
  reinforcement learning (\cref{sec:cross-domain,sec:applications}).
\end{enumerate}

The argument can be read as five claims: (1) order is a real optimization coordinate; (2) space
size is not a difficulty measure; (3) search operators define the relevant locality; (4) exact
ground truth makes failures visible; and (5) forecasts must be retained beside outcomes, including
misses. The rest of the paper follows that order, then closes with cross-domain calibration and
specific limitations.
The final synthesis in \cref{tab:summary-v3} connects each headline finding to its practical
recommendation.

\begin{takeaway}
Ordering is a real optimization surface, but the size of the space says nothing about whether
searching it will pay. Difficulty must be measured from landscape structure before budget is spent.
\end{takeaway}

%% file: sections/01_terminology.tex
\section{Terminology and Prerequisites}
\label{sec:terminology}

PRISM assumes a constrained permutation problem rather than unrestricted system design. A
\emph{component} is a fixed item whose internal content does not change during the study. A
\emph{candidate} is one legal ordering \(\pi\) of those components, and the \emph{evaluator}
assigns its task score \(\fit(\pi)\). The resulting \emph{fitness landscape} is the set of legal
orderings together with their scores. A move operator---swap, insertion, inversion, or
scramble---defines a neighbourhood on that landscape. An \emph{executor} is the method selected to
spend the post-diagnostic evaluation budget, such as enumeration, sampling, a surrogate, or the
evolutionary searcher.
These objects and their dependencies are summarized in \cref{fig:terminology-v2}.

\begin{figure}[tbp]
\centering
\begin{tikzpicture}[
  box/.style={draw,rounded corners=2pt,minimum width=2.15cm,minimum height=0.9cm,
    align=center,font=\scriptsize\bfseries},
  arr/.style={-{Stealth[length=5pt]},thick},
  note/.style={font=\tiny,align=center,text width=2.4cm}
]
\node[box,fill=PrismCyan!18] (components) at (0,0) {Fixed\\components \(\mathcal C\)};
\node[box,fill=PrismBlue!14] (space) at (2.9,0) {Legal orderings\\\(S\subseteq\Sn\)};
\node[box,fill=PrismYellow!22] (eval) at (5.8,0) {Evaluator\\\(\fit(\pi)\)};
\node[box,fill=PrismPurple!15] (landscape) at (8.7,0) {Fitness landscape\\\(\mathcal L=\{(\pi,\fit(\pi))\}\)};
\node[box,fill=PrismGreen!16] (decision) at (11.6,0) {Pre-flight\\and executor};
\draw[arr] (components)--(space);
\draw[arr] (space)--(eval);
\draw[arr] (eval)--(landscape);
\draw[arr] (landscape)--(decision);
\node[note,below=0.30cm of space] {constraints define which permutations are admissible};
\node[note,below=0.30cm of landscape] {operators define neighbourhoods; \(\rhoone\) and \(\fdc\) summarize structure};
\end{tikzpicture}
\caption{\textbf{Objects required for a PRISM study.} The landscape is not merely the search-space
size: it is the scored structure induced by the evaluator and the chosen neighbourhood operators.
The empirical landscape regimes are visualized in \cref{fig:regime-map-v2}.}
\label{fig:terminology-v2}
\end{figure}

Five prerequisites should hold before optimization begins: (1) the component set and legal-order
constraints are explicit; (2) candidate scores are reproducible or accompanied by a declared noise
model; (3) the evaluator has sufficient resolution to distinguish orderings; (4) repeated
evaluations can be identified so distinct-evaluation budgets are comparable; and (5) at least one
uninformed and one domain-specific baseline are available. If these conditions fail, the correct
action is to repair the experimental design rather than interpret an optimizer ranking.

\begin{takeaway}
A permutation experiment is scientifically meaningful only when the components, legal space,
evaluator, landscape neighbourhood, and comparison budget are defined before search.
\end{takeaway}

%% file: sections/02_protocol.tex
\section{The Pre-Flight Protocol}
\label{sec:protocol}

A pre-flight uses a small probe set \(P\subset\Sn\) before the main optimization budget. Repeated
orderings are served from cache and do not count again; we call this the \emph{distinct-evaluation
budget}. The probe first checks score variance and the density of top ties. If the evaluator cannot
distinguish candidates, optimizer comparison is premature.

For a one-move operator \(o\), local fitness preservation is estimated by
\begin{equation}
  \rhoone(o)=\operatorname{corr}\!\left(\fit(\pi),\fit(o(\pi))\right),
  \label{eq:rho1}
\end{equation}
over sampled parent--child pairs. A positive value means that the neighbourhood preserves fitness
information; a value near zero means that one move resembles a fresh draw. Global guidance is
measured by fitness--distance correlation
\begin{equation}
  \fdc=\operatorname{corr}\!\left(\fit(\pi),d(\pi,\Omega)\right),
  \label{eq:fdc}
\end{equation}
where \(\Omega\) is the exact optimum set when known and otherwise the best frozen probe sample.
Because fitness is maximized, negative \(\fdc\) indicates that better candidates tend to lie nearer
the target; near-zero values imply weak global guidance; strongly positive values warn that local
improvement points away from the optimum. The protocol below routes these measurements to the
executor that spends the remaining budget, while quantitative decision bands appear in
\cref{tab:decision-rule-v2,fig:regime-map-v2}.

\begin{protocol}[tbp]
\caption{PRISM pre-flight protocol}
\label{proto:preflight}
\small
\textbf{Input:} frozen evaluator \(\fit\), component multiset, probe budget \(B_p\), main budget
\(B\), candidate executors.\\[-0.25em]
\textbf{Output:} recorded forecast, selected executor, verified outcome.
\begin{enumerate}
  \item Sample \(B_p\) distinct orderings; measure variance, tie density, and evaluator resolution.
  \item Estimate \(\rhoone(o)\) for each meaningful move \(o\); estimate exact or proxy \(\fdc\).
  \item Forecast the regime and select enumeration, sampling, evolution, a surrogate, or transfer.
  \item Execute exactly \(B\) distinct evaluations; repeated proposals do not increase the count.
  \item Record forecast, outcome, uncertainty, and any mismatch in the forecast ledger.
\end{enumerate}
\end{protocol}

\begin{table}[tbp]
\centering
\small
\caption{Conservative decision rule. Thresholds are empirical operating bands, not universal
constants.}
\label{tab:decision-rule-v2}
\begin{tabularx}{0.94\linewidth}{L{4.3cm}Y}
\toprule
\textbf{Pre-flight observation} & \textbf{Action} \\
\midrule
Negligible variance or dense top ties & Improve the evaluator or sample modestly and stop. \\
One operator has clearly largest \(\rhoone\) & Use its neighbourhood if search is otherwise justified. \\
Materially negative \(\fdc\) & Structured search may exploit global guidance. \\
Near-zero or modestly positive \(\fdc\) & Use uniform sampling as the method of record. \\
Strongly positive \(\fdc\) & Avoid local evolutionary search. \\
\bottomrule
\end{tabularx}
\end{table}

\begin{algorithm}[tbp]
\caption{Elitist permutation executor}
\label{alg:executor}
\small
\KwIn{fitness \(\fit\), move \(o\), population size \(m\), distinct-evaluation budget \(B\)}
\KwOut{best evaluated ordering}
Initialize \(m\) random permutations and evaluate unseen candidates\;
\While{fewer than \(B\) distinct orderings have been evaluated}{
  select a parent by tournament; propose \(\pi'=o(\pi)\)\;
  evaluate \(\pi'\) only if its cache key is new\;
  replace the worst non-elite member; retain the best-so-far ordering\;
}
\Return best cached ordering\;
\end{algorithm}

\begin{takeaway}
Two cheap measurements separate locally informative, near-random, and deceptive regimes. The
protocol can therefore select uniform sampling as a successful outcome rather than treating it as
a failed optimizer.
\end{takeaway}

%% file: sections/02_related_work.tex
\section{Related Work}
\label{sec:related-work}

\paragraph{Landscape analysis and permutation search.}
One-step autocorrelation and fitness--distance correlation are classical measures of local
preservation and global guidance~\cite{weinberger1990correlated,jones1995fdc}. Permutation
landscape analysis further connects distance metrics and move operators to position, precedence,
and adjacency structure~\cite{cicirello2022landscape}; convergence under irreducible mutation and
elitist retention is standard~\cite{rudolph1994convergence}. PRISM does not claim these ingredients
as new. Its contribution is to operationalize them as a low-budget, recorded selector among
enumeration, sampling, evolutionary search, and a structure-matched surrogate, then test the
forecast on exact and live application landscapes.

\paragraph{Prompt order, position, and calibration.}
Lu et al.\ show that reordering few-shot demonstrations can produce large performance differences
and that effective orders need not transfer across models~\cite{lu2022fantastically}. Zhao et al.
reduce prompt-induced instability through contextual calibration~\cite{zhao2021calibrate}. These
studies motivate robustness interventions; PRISM instead asks whether an ordering surface has
enough measurable structure to justify search. Lost-in-the-middle effects concern the placement of
retrieved evidence in long contexts~\cite{liu2024lostmiddle}, while Sclar et al.\ quantify
sensitivity to formatting choices~\cite{sclar2024sensitivity}. The present experiments hold
component wording and rendering fixed and vary only the order of instruction modules.

Option-order studies expose related positional bias in multiple-choice evaluation. Reordering
answer options can substantially change measured performance~\cite{pezeshkpour2024optionorder},
and selection bias toward option identifiers motivates explicit debiasing
methods~\cite{zheng2024mcq}. Those works permute answer choices; PRISM permutes the internal order
of a fixed reasoning scaffold and characterizes the full six-component landscape. Recent causal
attention and presentation-order accounts provide candidate mechanisms for why later information
may be used differently~\cite{ok2026lost,zhou2026curse}, but they do not determine whether a given
ordering landscape is searchable.

\paragraph{Prompt optimization.}
Chain-of-thought and self-consistency alter elicited computation
~\cite{wei2022cot,wang2022selfconsistency}; APE, OPRO, and DSPy optimize prompt text or programs
~\cite{zhou2022ape,yang2023opro,khattab2023dspy}. PRISM freezes component content and isolates order
as a separate coordinate. The OPRO experiment therefore tests complementarity rather than claiming
that ordering replaces content optimization.

%% file: sections/03_landscapes.tex
\section{Landscapes, Operators, and Locality}
\label{sec:landscapes}

Search operators define what ``nearby'' means. Absolute-position landscapes reward elements in
specific slots and align with swap moves. Precedence landscapes reward pairwise order and align
with insertion. Adjacency landscapes reward neighbouring pairs and align with inversion. Scramble
is a broad disruption rather than a precise neighbourhood. These types are diagnostic ideals, not
a claim that real applications are pure.
The three operator-landscape pairings are illustrated in \cref{fig:operators-v2}.

\begin{figure}[tbp]
\centering
\includegraphics[width=0.92\linewidth]{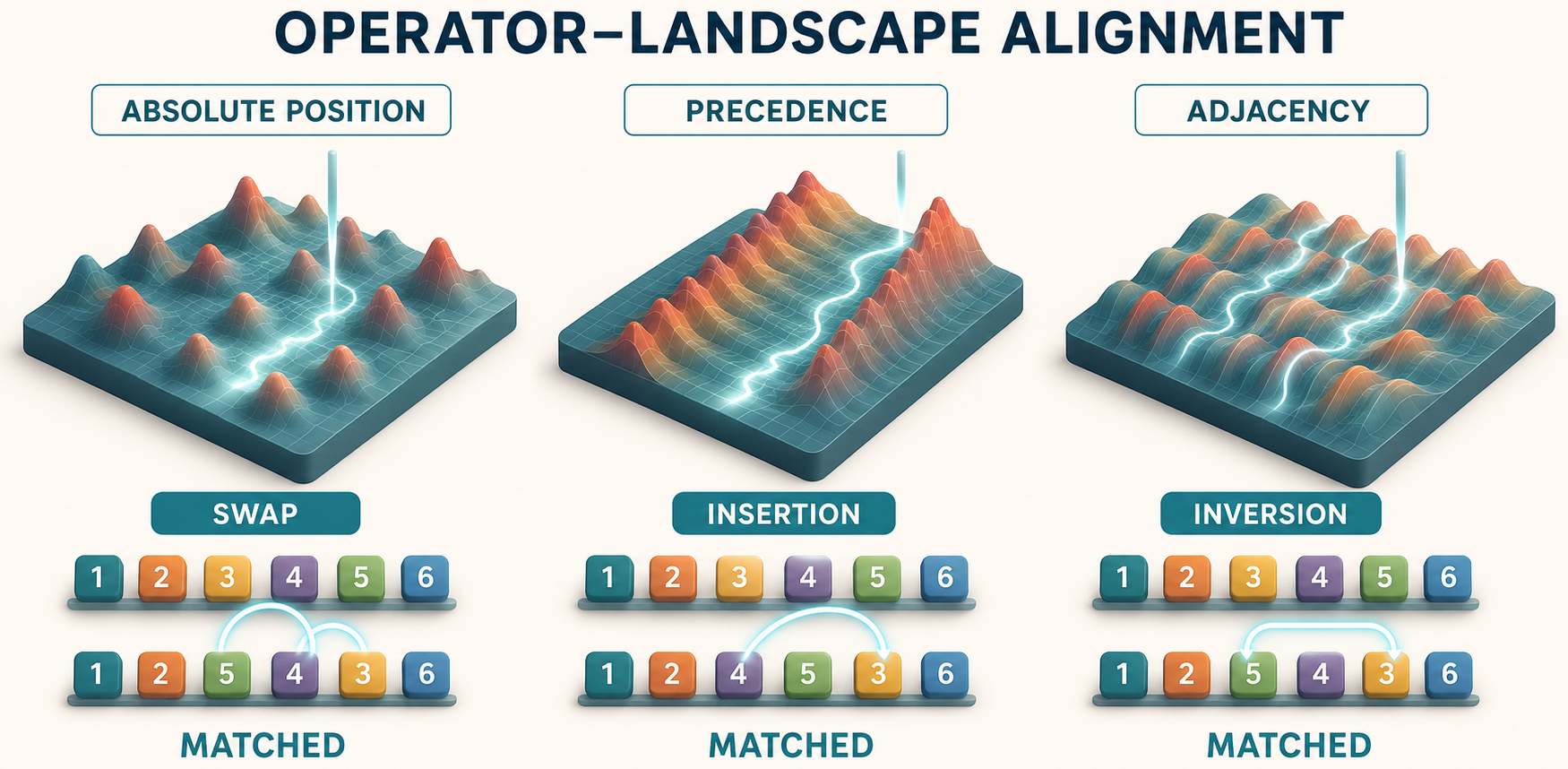}
\caption{\textbf{Operator-to-landscape alignment.} The conceptual illustration pairs absolute
position with swap, precedence with insertion, and adjacency with inversion. It is a schematic,
not a measured surface; the observed censoring and locality statistics are reported in the text and
\cref{fig:regime-map-v2}.}
\label{fig:operators-v2}
\end{figure}

The difference is operationally large. At \(n=16\), the matched operator solves the
absolute-position objective in 405 generations, the precedence objective in 396, and the adjacency
objective in 1,199, while mismatched operators often reach the 10,000-generation cap without an
optimum. We call such a run \emph{censored}: it exhausts its cap without success, and it remains in
the denominator rather than being discarded. When the type is unknown, a uniform four-operator
portfolio removes censoring on these typed landscapes at an observed \(1.1\)--\(4.1\times\) cost.

The pre-flight automates the choice. With 100 move pairs per operator, the pair-sampled
\(\rhoone\) selector identifies the matched move with mean accuracy 0.925 across typed synthetic
and enumerated language-model landscapes. Representation can matter even more than population
tuning: on a 5,040-order precedence landscape, a ridge surrogate using pairwise precedence
features reaches the unique optimum in 14.8 evaluations, bootstrap 95\% interval [14,15], versus
54.2 [50,59] for matched elitist search.

\begin{figure}[tbp]
\centering
\begin{tikzpicture}
\begin{axis}[
  prism axis,width=0.78\linewidth,height=0.52\linewidth,
  xlabel={Selected-operator one-move autocorrelation \(\rhoone\)},
  ylabel={Fitness--distance correlation \(\fdc\)},
  xmin=-0.02,xmax=0.82,ymin=-0.90,ymax=0.90,
  xtick={0,0.2,0.4,0.6,0.8},ytick={-0.8,-0.4,0,0.4,0.8},
  xticklabel style={/pgf/number format/fixed,/pgf/number format/precision=1},
  yticklabel style={/pgf/number format/fixed,/pgf/number format/precision=1},
  legend style={at={(0.02,0.02)},anchor=south west,fill=white,fill opacity=0.9,text opacity=1},
  axis on top
]
\fill[PrismGreen!9] (axis cs:0.30,-0.90) rectangle (axis cs:0.82,-0.05);
\fill[PrismYellow!13] (axis cs:-0.02,-0.05) rectangle (axis cs:0.82,0.30);
\fill[PrismRed!9] (axis cs:-0.02,0.30) rectangle (axis cs:0.82,0.90);
\node[font=\scriptsize\itshape,text=PrismGreen!55!black] at (axis cs:0.63,-0.72) {guiding};
\node[font=\scriptsize\itshape,text=PrismYellow!45!black] at (axis cs:0.62,0.14) {sample / caution};
\node[font=\scriptsize\itshape,text=PrismRed!65!black] at (axis cs:0.63,0.70) {deceptive};

\addplot+[only marks,mark=triangle*,mark size=2.8pt,color=black] coordinates {
  (0.049,-0.252) (0.058,-0.248) (0.044,-0.202) (0.021,-0.063)};
\addlegendentry{neural}
\addplot+[only marks,mark=square*,mark size=2.8pt,color=PrismBlue] coordinates {
  (0.655,-0.829) (0.777,-0.321) (0.675,-0.254)};
\addlegendentry{typed, matched}
\addplot+[only marks,mark=square*,mark size=3pt,color=PrismRed!80!black]
  coordinates {(0.649,0.783)};
\addlegendentry{deceptive}
\addplot+[only marks,mark=diamond*,mark size=3.3pt,color=PrismPurple] coordinates {
  (0.547,-0.346) (0.750,-0.095) (0.178,0.197)};
\addlegendentry{LLM / math}
\node[font=\tiny,anchor=north west] at (axis cs:0.025,-0.063) {parity};
\node[font=\tiny,anchor=south west] at (axis cs:0.649,0.783) {deceptive};
\node[font=\tiny,anchor=south east] at (axis cs:0.547,-0.346) {six instructions};
\node[font=\tiny,anchor=north east] at (axis cs:0.750,-0.095) {eight modules};
\node[font=\tiny,anchor=south west] at (axis cs:0.178,0.197) {MATH-500};
\end{axis}
\end{tikzpicture}
\caption{\textbf{The two pre-flight axes separate guiding, near-random, and deceptive regimes.}
The four neural, three typed synthetic, and one deceptive point use exact FDC and the largest
\(\rhoone\) among swap, insertion, and inversion; scramble is excluded because identity moves can
inflate its correlation. Three later language-model and mathematics probes use the same axes with
frozen-sample approximations where exact enumeration is unavailable. The shaded regions summarize
the conservative rule in \cref{tab:decision-rule-v2}; optimum density and evaluator resolution
remain additional gates.}
\label{fig:regime-map-v2}
\end{figure}

\begin{takeaway}
Search operators define locality. A mismatched move can prevent success rather than merely slow it,
while a representation aligned with the measured structure can outperform population search.
\end{takeaway}

%% file: sections/04_falsification.tex
\section{Ground Truth and the Falsification}
\label{sec:falsification}

Search evaluation is circular when the optimizer's best observation is also treated as the answer
key. Repeated stochastic queries make the maximum optimistically biased, and a method that explores
more candidates can redefine the apparent target in its own favour. We therefore enumerate every
ordering whenever feasible, freeze sampled pools otherwise, and compare methods at equal
distinct-evaluation budgets. Exact landscapes report the optimum set, its density, hit probability,
and regret rather than distance to a method-dependent best observation.
The complete parity counterexample and its audited hit counts are summarized in
\cref{fig:falsification-v2}.

\begin{figure}[tbp]
\centering
\includegraphics[width=0.72\linewidth,height=0.29\textheight,keepaspectratio]
  {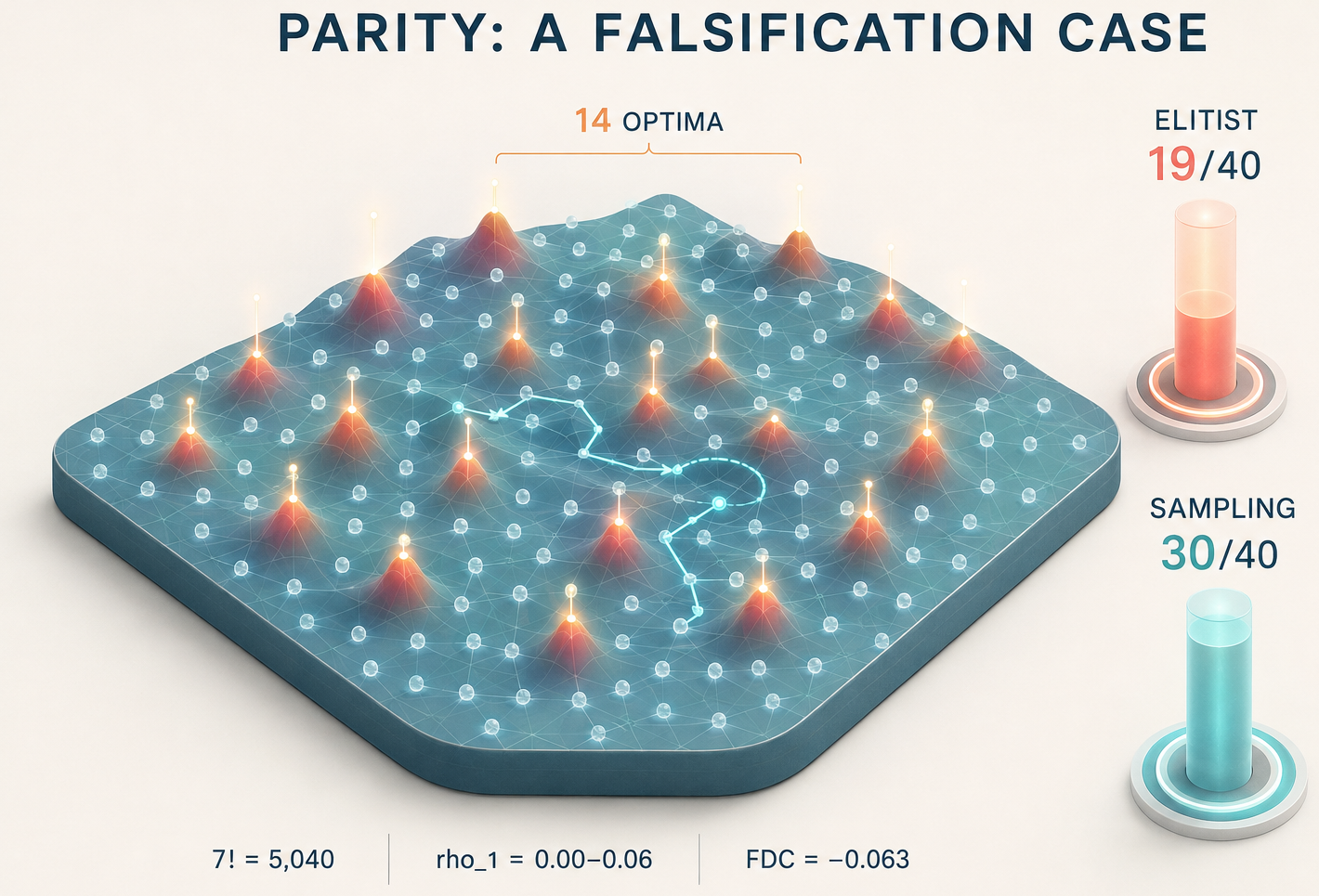}
\caption{\textbf{Parity is the central falsification case.} The schematic marks 14 optima among
5,040 orderings and audited hit counts of 19/40 for elitist search versus 30/40 for sampling.
Wilson score 95\% intervals are [0.33,0.63] and [0.60,0.86].}
\label{fig:falsification-v2}
\end{figure}

The decisive test is a deterministic neural parity landscape with seven fixed modules. It contains
14 optima among \(7!=5{,}040\) orderings. The study was intended to show that scale-aware tuning
would preserve an evolutionary advantage. It did not. Default elitist search found an optimum in
4 of 15 exploratory runs. Larger populations and stronger mutation removed population collapse but
required 384 mean evaluations, versus 318 for uniform sampling.

The failure is structural rather than a tuning anecdote. Swap, insertion, and inversion have
\(\rhoone\) between 0.00 and 0.06, and exact \(\fdc=-0.063\). One local move therefore carries
almost no information about the next candidate, and global distance provides little additional
guidance. Aging and surrogate variants raise coverage, but no tested structured method establishes
a decisive advantage over the uninformed baseline. The pre-flight predicts this boundary before
the search comparison is run.

This negative result changed the research claim. The supported statement is not ``evolution beats
random''; it is that the benefit of structured permutation search is conditional on measurable
landscape structure. A correct forecast that sampling should be competitive saves budget and is a
success for the protocol.

\begin{takeaway}
Structured search is not universally better than random sampling. On a landscape with almost no
local signal, the honest baseline wins, and the pre-flight identifies that condition in advance.
\end{takeaway}

%% file: sections/05_llm_landscape.tex
\section{The Instruction-Ordering Landscape}
\label{sec:llm-landscape}

The flagship landscape freezes six one-sentence modules---RESTATE, IDENTIFY, PLAN, COMPUTE,
CHECK, and ANSWER---and changes only their order. Each permutation is rendered as numbered steps
before the same 32-question GSM8K subset~\cite{cobbe2021gsm8k}. Gemini 2.5 Flash-Lite is queried
with deterministic decoding; answers are scored by a frozen numeric extractor; every
ordering--question response is cached. All \(6!=720\) orderings are evaluated.

Accuracy ranges from 0.063 to 0.969, with mean 0.717 and standard deviation 0.230. The effect is
not merely a few exceptional prompts: marginal position averages in
\cref{fig:position-heatmap-v2} reveal a coherent mechanism.

\begin{figure}[tbp]
\centering
\begin{tikzpicture}[x=1.05cm,y=0.64cm]
\foreach \x in {1,...,6}{\node[font=\scriptsize] at (\x,-0.65) {\x};}
\node[font=\scriptsize] at (3.5,-1.05) {prompt position};
\foreach \y/\name in {5/RESTATE,4/IDENTIFY,3/PLAN,2/COMPUTE,1/CHECK,0/ANSWER}
  \node[font=\scriptsize,anchor=east] at (0.42,\y) {\name};

\foreach \x/\v/\p in {1/0.70/38,2/0.72/41,3/0.72/41,4/0.73/43,5/0.73/43,6/0.70/38}
  \node[draw=white,fill=PrismBlue!\p!white,minimum width=1.05cm,minimum height=0.64cm,
    inner sep=0pt,font=\scriptsize] at (\x,5) {\v};
\foreach \x/\v/\p in {1/0.76/47,2/0.74/44,3/0.72/41,4/0.71/40,5/0.71/40,6/0.65/31}
  \node[draw=white,fill=PrismBlue!\p!white,minimum width=1.05cm,minimum height=0.64cm,
    inner sep=0pt,font=\scriptsize] at (\x,4) {\v};
\foreach \x/\v/\p in {1/0.77/49,2/0.72/41,3/0.70/38,4/0.71/40,5/0.72/41,6/0.68/35}
  \node[draw=white,fill=PrismBlue!\p!white,minimum width=1.05cm,minimum height=0.64cm,
    inner sep=0pt,font=\scriptsize] at (\x,3) {\v};
\foreach \x/\v/\p in {1/0.91/60,2/0.78/50,3/0.71/40,4/0.64/29,5/0.58/20,6/0.68/35}
  \node[draw=white,fill=PrismBlue!\p!white,minimum width=1.05cm,minimum height=0.64cm,
    inner sep=0pt,font=\scriptsize] at (\x,2) {\v};
\foreach \x/\v/\p in {1/0.73/43,2/0.76/47,3/0.71/40,4/0.69/37,5/0.68/35,6/0.72/41}
  \node[draw=white,fill=PrismBlue!\p!white,minimum width=1.05cm,minimum height=0.64cm,
    inner sep=0pt,font=\scriptsize] at (\x,1) {\v};
\foreach \x/\v/\p in {1/0.43/5,2/0.58/20,3/0.73/43,4/0.81/54,5/0.87/59,6/0.87/59}
  \node[draw=white,fill=PrismBlue!\p!white,minimum width=1.05cm,minimum height=0.64cm,
    inner sep=0pt,font=\scriptsize] at (\x,0) {\v};
\end{tikzpicture}
\caption{\textbf{ANSWER is favoured late and COMPUTE early.} Each cell is mean accuracy over all
720 orderings in which the named module occupies the given position. ANSWER rises from 0.435 in
the first position to 0.870 in the last; COMPUTE falls from 0.908 first to 0.585 fifth.}
\label{fig:position-heatmap-v2}
\end{figure}

Every one of the three worst orderings places ANSWER before COMPUTE; the best examples reverse
that precedence. This is consistent with the idea that presentation order should respect the order
in which information is used~\cite{zhou2026curse}, while not identifying a unique causal
mechanism. The pre-flight selects insertion: \(\rhoone=0.547\), compared with 0.411 for swap and
0.297 for inversion. Exact \(\fdc=-0.346\) confirms guiding structure.

Guidance does not guarantee a large optimizer advantage. Sixty orderings, 8.3\% of the space,
attain the maximum measured accuracy. Across 40 replay runs, evolutionary search reaches an
optimum in 9.9 mean distinct evaluations, bootstrap percentile 95\% interval [7,13], versus 10.8
[8,14] for uniform sampling. The intervals overlap. An earlier 15-run estimate suggested a large
speedup; the higher-powered audit withdrew that claim.

Two additional modules expand the space to \(8!=40{,}320\). A probe of 222 unbiased orderings on
20 questions finds accuracy from 0.10 to 1.00 and selects insertion at \(\rhoone=0.75\), but
approximate \(\fdc=-0.095\) forecasts that sampling will be competitive. Evolutionary search and
uniform sampling both reach measured accuracy 1.0 at budgets 30, 60, and 120 in all four replay
runs. At this resolution, one question changes accuracy by 0.05 and perfect ties are dense; the
protocol recommends a modest sample or a better evaluator rather than a more elaborate searcher.

\begin{takeaway}
A fixed set of six reasoning instructions spans 90.6 accuracy points under order alone, with
interpretable position effects. Dense optima nevertheless make uniform sampling competitive.
\end{takeaway}

%% file: sections/06_transfer_baselines.tex
\section{Transfer and Baselines}
\label{sec:transfer-baselines}

The exact six-module landscape supports transfer at three resolutions. First, a score constructed
from its module--position table correlates with fitness on the 222 unbiased eight-module orderings
at Spearman \(\rho=0.665\), bootstrap 95\% interval [0.582,0.733]. Second, source-selected top
orderings are tested on fixed pools for Gemma-3-27B, Qwen3-30B-A3B, and Llama-3.2-3B. Fine rank
transfer varies, but the practical top-ordering advantage is positive on all three targets, with
bootstrap intervals excluding zero. Third, on a filtered MATH-500 hard-reasoning
pool~\cite{hendrycks2021math}, the easy-task position score correlates with target fitness at
\(r=0.432\), while the pre-flight's random-competitive forecast is confirmed by overlapping search
intervals.
\Cref{fig:transfer-baselines-v2} reports both the cross-family effects and the persistence of
ordering variation after wording optimization.

\begin{figure}[tbp]
\centering
\begin{subfigure}[t]{0.58\linewidth}
\centering
\begin{tikzpicture}
\begin{axis}[
  prism axis,width=0.84\linewidth,height=0.58\linewidth,
  xmin=-0.03,xmax=0.26,xtick={0,0.05,0.10,0.15,0.20,0.25},
  xticklabel style={/pgf/number format/fixed,/pgf/number format/precision=2},
  xlabel={Top-ordering advantage over random},
  symbolic y coords={Qwen3-30B-A3B,Llama-3.2-3B,Gemma-3-27B},
  ytick=data,y dir=reverse,xmajorgrids=true,ymajorgrids=false,
  yticklabel style={font=\scriptsize}
]
\addplot+[only marks,mark=*,mark size=2.6pt,color=PrismBlue,
  error bars/.cd,x dir=both,x explicit] coordinates {
  (0.032,Qwen3-30B-A3B) +- (0.019,0)
  (0.061,Llama-3.2-3B) +- (0.058,0)
  (0.123,Gemma-3-27B) +- (0.113,0)};
\draw[PrismRed,dashed,thick] (axis cs:0,Qwen3-30B-A3B)--(axis cs:0,Gemma-3-27B);
\end{axis}
\end{tikzpicture}
\caption{Cross-family transfer with 95\% intervals.}
\end{subfigure}\hfill
\begin{subfigure}[t]{0.39\linewidth}
\centering
\begin{tikzpicture}
\begin{axis}[
  prism axis,width=0.92\linewidth,height=0.86\linewidth,
  ybar,bar width=18pt,ymin=0,ymax=0.082,
  ylabel={Ordering-driven std. dev.},
  symbolic x coords={Original,Rewritten},xtick=data,
  x tick label style={rotate=15,anchor=east},
  ytick={0,0.02,0.04,0.06,0.08},
  yticklabel style={/pgf/number format/fixed,/pgf/number format/precision=2},
  nodes near coords,
  nodes near coords style={font=\scriptsize,/pgf/number format/fixed,/pgf/number format/precision=3}
]
\addplot+[fill=PrismPurple!50,draw=PrismPurple!85!black]
  coordinates {(Original,0.0626) (Rewritten,0.0587)};
\end{axis}
\end{tikzpicture}
\caption{Variation survives OPRO rewriting.}
\end{subfigure}
\caption{\textbf{Ordering structure transfers and is complementary to wording optimization.}
Left: source top orderings beat random for Qwen [+0.013,+0.051], Llama [+0.003,+0.118], and
Gemma [+0.008,+0.234]. Right: after 24 OPRO rewriting rounds, ordering-driven variation retains
94\% of its original standard deviation.}
\label{fig:transfer-baselines-v2}
\end{figure}

The wording baseline directly tests the objection that order sensitivity is only a symptom of poor
instruction text. OPRO raises held-out accuracy by +0.032, paired bootstrap 95\% interval
[+0.013,+0.052], yet the ordering standard deviation changes only from 0.0626 to 0.0587. Content
and order are therefore separate optimization coordinates.

We also compare OPRO with the evolutionary executor on the same 25-evaluation ordering budget,
using five shared runs. Final best-found accuracy is 0.500 [0.467,0.527] for evolution and 0.507
[0.480,0.540] for OPRO; their difference is \(-0.007\) [\(-0.053\),+0.040]. The comparison is a
tie at the tested budget, consistent with the near-zero-guidance forecast. The point is not that
the methods are identical, but that the pre-flight correctly avoids promising a directional win.

\begin{takeaway}
Ordering is a separate optimization axis from wording. Rewriting improves the average prompt but
does not flatten the ordering surface, while optimizer differences vanish in the forecast
near-random regime.
\end{takeaway}

%% file: sections/07_cross_domain.tex
\section{Cross-Domain Calibration}
\label{sec:cross-domain}

The protocol is also tested outside prompting. A scientific machine learning suite based on sparse
identification of nonlinear dynamics~\cite{brunton2016sindy} permutes six
preprocessing operations for sparse identification of nonlinear dynamics. All 720 orderings are
enumerated for each of six systems. Ordering changes fitness in every system; insertion has the
largest \(\rhoone\) in all six; and the evolutionary executor has more successful runs than
uniform sampling in all six, with pronounced advantages in four. The main miss is diagnostic:
Cayley distance is aligned with swaps and underestimates guidance on several insertion-shaped
pipeline landscapes.

A tabular neural architecture search study using a compact NAS-Bench-201
backend~\cite{dong2020nasbench201} registers 18 HIGH/LOW forecasts on fixed-operation
multiset slices before search. Thirteen are correct. CIFAR-100 and ImageNet16-120 are each 6/6;
all five misses occur on CIFAR-10-valid, and the three LOW misses are conservative because search
wins despite the cautious forecast. These studies test calibration, not a claim that PRISM is a
new pipeline learner or architecture-search algorithm.
The registered architecture forecasts are broken down by dataset and stratum in
\cref{fig:nas-calibration-v3}.

\begin{figure}[tbp]
\centering
\begin{tikzpicture}
\begin{axis}[
  prism axis,width=0.74\linewidth,height=0.34\linewidth,
  ybar,bar width=13pt,ymin=0,ymax=3.35,
  ylabel={Correct forecasts (of 3)},
  symbolic x coords={CIFAR-10,CIFAR-100,ImageNet16-120},xtick=data,
  x tick label style={font=\scriptsize},
  ytick={0,1,2,3},
  legend style={at={(0.5,1.02)},anchor=south,legend columns=2},
  nodes near coords,nodes near coords style={font=\scriptsize}
]
\addplot+[fill=PrismBlue!55,draw=PrismBlue!85!black]
  coordinates {(CIFAR-10,1) (CIFAR-100,3) (ImageNet16-120,3)};
\addlegendentry{HIGH: search predicted}
\addplot+[fill=PrismYellow!55,draw=PrismYellow!70!black]
  coordinates {(CIFAR-10,0) (CIFAR-100,3) (ImageNet16-120,3)};
\addlegendentry{LOW: sampling predicted}
\end{axis}
\end{tikzpicture}
\caption{\textbf{Registered NAS forecast calibration by dataset and stratum.} Each bar summarizes
three fixed-operation slices. Thirteen of 18 forecasts are correct. CIFAR-100 and ImageNet16-120
are correct in all cells; the CIFAR-10 LOW misses are conservative under-calls because search wins.}
\label{fig:nas-calibration-v3}
\end{figure}

A \emph{forecast ledger} pairs every recorded call with its outcome, including corrections, ties,
and null results. The complete numerical record appears in \cref{tab:ledger-v2}; retaining misses
is what exposes the current distance-metric limitation rather than converting it into a post hoc
success.

Across the full program, recorded external-model inference expenditure remains below US\$20, and
the synthetic, neural, scientific-pipeline, and tabular architecture studies use CPU execution.
Exact tables, frozen pools, caches, and registered forecasts are retained so a wrong call remains
auditable rather than being rewritten as a successful post hoc explanation.

%% file: sections/08_limitations.tex
\section{Limitations}
\label{sec:limitations}

Exact neural ground truth stops at seven modules and exact language-model ground truth at six; the
eight-module study observes 222 of 40,320 orderings. The language evidence covers two mathematical
reasoning benchmarks and frozen model snapshots, not general generation, retrieval, coding, or
future model versions. Fixed question pools and graders also bound resolution: with 20 binary
questions, one answer moves fitness by 0.05 and produces dense ties.

Exact \(\fdc\) requires known optima. Sampled studies use the best frozen probe observation, which
can miss the relevant basin, while swap-aligned Cayley distance conservatively under-calls some
insertion-shaped pipelines. Precedence-aligned distance remains future work. The OPRO comparison
has five runs and supports a tie rather than fine equivalence.

\begin{takeaway}
The evidence supports calibrated decisions on measured landscapes, not universal optimizer
superiority or guaranteed transfer and calibration at larger \(n\).
\end{takeaway}

%% file: sections/09_conclusion.tex
\section{Conclusion}
\label{sec:conclusion}

Permutation-space difficulty follows structure rather than cardinality: a 5,040-order parity
landscape can be harder for elitist search than a matched synthetic space with roughly twenty
trillion orderings. PRISM therefore measures locality, global guidance, score variance, and tie
density before allocating budget. Its exhaustive instruction map spans 6.3\%--96.9\% accuracy,
yet dense optima and weak guidance still make uniform sampling the correct executor.

\textbf{Recommendations.} Define legal orderings, enumerate small spaces, verify evaluator
resolution, align distance with the move, and compare executors under one distinct-evaluation
budget. Register the forecast first and retain ties, censored runs, null results, and misses.

\textbf{Future work and further research.} Priorities are precedence- and adjacency-aligned
distances, uncertainty-aware probes, adaptive probe allocation, scalable surrogates, and
independent replication across tasks and model families. Language-agent pipelines, compiler and
database plans, scientific workflows, and laboratory automation offer low-cost replay; clinical,
security, financial, and autonomous applications require safety constraints and prospective human
oversight. The 24-domain roadmap in \cref{tab:applications-v2} separates opportunities from
validated evidence.

\begin{table}[tbp]
\centering
\scriptsize
\caption{Final summary of the paper's key findings, limitations, and recommended actions.}
\label{tab:summary-v3}
\begin{tabularx}{0.98\linewidth}{L{2.6cm}L{4.7cm}Y}
\toprule
\textbf{Focus} & \textbf{Key finding} & \textbf{Takeaway or action} \\
\midrule
Searchability & Space size does not predict difficulty; measured locality and guidance do & Run the PRISM pre-flight before committing the main budget \\
Executor selection & No executor dominates across guiding, near-random, and deceptive regimes & Select enumeration, sampling, evolution, a surrogate, or transfer conditionally \\
Exact falsification & On parity, elitist search hits 19/40 optima versus 30/40 for sampling & Treat a correct sampling forecast as a successful result \\
Instruction order & Six fixed modules span 6.3\%--96.9\% accuracy and show coherent position effects & Treat ordering as a distinct optimization coordinate \\
Transfer and baselines & Position structure transfers imperfectly and survives wording optimization & Revalidate rankings on each target and optimize wording and order separately \\
Cross-domain calibration & Scientific pipelines and NAS slices reveal both correct calls and distance-metric misses & Retain a forecast ledger and revise diagnostics from failures \\
Scope & Evidence is concentrated at small \(n\), fixed evaluators, and bounded task families & Replicate at larger scale with aligned metrics, uncertainty, and safety constraints \\
\bottomrule
\end{tabularx}
\end{table}

\begin{takeaway}
PRISM is a falsifiable protocol for deciding when to search, when to sample, and when to improve
the evaluator first; its value lies in calibrated choice rather than universal optimizer
superiority.
\end{takeaway}

%% file: sections/08_applications.tex
\section{Applications and Experimental Roadmap}
\label{sec:applications}

The protocol applies to fixed component sets whose order can change without changing their content.
\Cref{tab:applications-v2} maps 24 problem domains where ordering may affect accuracy, cost,
latency, safety, robustness, or resource use. Each study must define the legal space, verify score
resolution, register the forecast, and compare with sampling and the strongest domain baseline;
the entries are testable opportunities rather than claims of validated gains.

\input{sections/08_applications_table.tex}

\begin{takeaway}
High-impact use requires a legal ordering space, a resolvable evaluator, a registered forecast, and
equal-budget domain baselines.
\end{takeaway}

%% file: sections/08_applications_table.tex
\begin{table}[H]
\centering
\caption{Twenty-four problem domains in which PRISM can test ordering effects. Each entry defines a
legal permutation unit and an impact objective; it identifies a testable opportunity rather than a
validated win. New studies still require evaluator-resolution, safety, pre-flight, and equal-budget
baseline checks.}
\label{tab:applications-v2}
\renewcommand{\arraystretch}{1.24}
\begin{minipage}[t]{0.485\linewidth}
\normalsize
\begin{tabularx}{\linewidth}{L{2.35cm}Y}
\toprule
\textbf{Domain} & \textbf{Permuted unit and impact objective} \\
\midrule
Language-model prompting & Reasoning or tool instructions; accuracy, robustness, token use \\
Agent workflows & Retrieval, planning, action, checking; success, calls, latency \\
Compiler optimization & Legal passes; runtime, code size, energy \\
Neural architecture & Fixed blocks or edge operations; accuracy and efficiency \\
Scientific discovery & Preprocessing and identification stages; recovery and stability \\
Data engineering & Cleaning, joining, transformation stages; quality and cost \\
Clinical diagnostics & Test, triage, and review steps; safety, time, accuracy \\
Genomics & Quality control, alignment, and variant calling; sensitivity and runtime \\
Drug discovery & Screening, synthesis, and assay stages; yield and experimental cost \\
Manufacturing & Process operations; yield, defects, energy \\
Robotics & Perception, planning, and control modules; success and safety \\
Reinforcement learning & Options or curriculum tasks; sample efficiency and return \\
\bottomrule
\end{tabularx}
\end{minipage}\hfill
\begin{minipage}[t]{0.485\linewidth}
\normalsize
\begin{tabularx}{\linewidth}{L{2.35cm}Y}
\toprule
\textbf{Domain} & \textbf{Permuted unit and impact objective} \\
\midrule
Cybersecurity response & Detection, containment, recovery steps; risk and response time \\
Network service chains & Routing, filtering, inspection functions; throughput and latency \\
Cloud deployment & Build, test, deploy, migrate stages; downtime and cost \\
Database execution & Joins, filters, and aggregations; latency and resource use \\
Image processing & Denoising, normalization, segmentation; quality and accuracy \\
Signal processing & Filters and transforms; signal quality and latency \\
Remote sensing & Correction, fusion, classification stages; mapping accuracy \\
Financial risk & Cleaning, modelling, stress testing, reporting; stability and compliance \\
Supply chains & Sourcing, stocking, routing steps; cost and service level \\
Maintenance & Inspection, diagnosis, repair actions; reliability and downtime \\
Education & Lessons, examples, and exercises; mastery and retention \\
Laboratory automation & Preparation, mixing, measurement, quality control; reproducibility and time \\
\bottomrule
\end{tabularx}
\end{minipage}
\end{table}

%% file: sections/a_theory.tex
\section{Executor Guarantees and Scope}
\label{app:theory}

The protocol's empirical claims do not depend on a novel convergence theorem. The evolutionary
executor inherits the standard reachability argument for elitist evolutionary algorithms with an
irreducible mutation kernel~\cite{rudolph1994convergence}.
The executor cycle to which the following guarantees apply is shown in
\cref{fig:evolutionary-loop}.

\begin{figure}[tbp]
\centering
\includegraphics[width=0.90\linewidth]{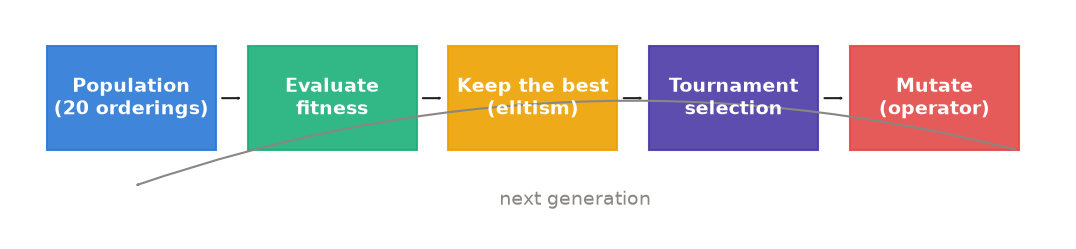}
\caption{\textbf{Evolutionary executor loop.} A population is evaluated, its best member is
retained, tournament selection chooses parents, and the selected mutation operator produces the
next generation. The convergence statements below apply to this executor, not to enumeration or
uniform sampling.}
\label{fig:evolutionary-loop}
\end{figure}

\begin{proposition}[Reachability]
\label{prop:reachability-v2}
Let \(\Sn\) be finite. If mutation has positive probability of proposing every transposition and
selection can choose every current population member with positive probability, then every
permutation is reachable from every population state in finitely many steps with positive
probability.
\end{proposition}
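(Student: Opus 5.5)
We argue constructively, combining the fact that transpositions generate \(\Sn\) with a product of positive one-step probabilities. Fix an arbitrary population state \(X_0\) and a target permutation \(\sigma\in\Sn\). We show that the target enters the population after finitely many generations with positive probability.

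\textbf{Step 1.} Pick any member \(\pi_0\) of \(X_0\). Write \(\sigma\pi_0^{-1}\), or \(\pi_0^{-1}\sigma\) depending on whether moves act on positions or values, as a product of transpositions \(\tau_k\cdots\tau_1\). The classical fact that \(\Sn\) is generated by transpositions gives such a word with \(k\le n-1\). Define the intermediate orderings \(\pi_j=\tau_j\pi_{j-1}\), so that \(\pi_k=\sigma\).

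\textbf{Step 2.} Build an explicit trajectory of the executor in \cref{alg:executor} along which \(\pi_{j-1}\) is selected and mutated into \(\pi_j\) at step \(j\). Selection chooses every current member with positive probability \(p_s>0\), and mutation proposes every transposition with positive probability \(p_m>0\). Since \(\Sn\) is finite, there are finitely many population states, so we may take the minima of these probabilities over all states and all transpositions; each minimum is still positive. Each step of the trajectory therefore has probability at least \(p_sp_m\), and the whole trajectory has probability at least \((p_sp_m)^{k}>0\).

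\textbf{Step 3 (main obstacle).} We must show that \(\pi_j\) actually survives into the population, so that it can serve as the next parent. Replacement removes the \emph{worst non-elite} member, so a freshly inserted child could itself be the worst individual and be discarded. Our plan is as follows.
\begin{itemize}
  \item Read the replacement step as in \cref{alg:executor}: the child \(\pi'\) replaces the current worst non-elite member, so it is present in the population at the next selection step.
  \item If a cache hit occurs, \(\pi_j\) was already evaluated. It is still proposed, and we treat it as inserted.
  \item If instead the implementation discards children worse than the worst member, we fall back to a weaker reading of ``reachable'': \(\pi_j\) is \emph{proposed}, and hence evaluated, at step \(j\).
\end{itemize}
Under this fallback we need a different route. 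We would use the state-independent fact that the elite, or any member, can be selected, and take transpositions directly from a surviving member to \(\sigma\). This works because every permutation lies within \(n-1\) transpositions of any surviving member. Stated as \emph{proposed with positive probability}, the claim needs only chains whose intermediate children remain in the population under the insertion semantics above, so we state this assumption explicitly.

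\textbf{Step 4.} With survival secured, the \(k\)-step event is a finite intersection of conditionally positive events. By the chain rule for the Markov chain on population states, its probability is the product bound from Step 2, which is positive. Since \(X_0\) and \(\sigma\) were arbitrary, the proposition follows.
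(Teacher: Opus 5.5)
Your argument is correct and follows the paper's route: write the target as at most \(n-1\) transpositions applied to a current member, then multiply the positive selection and mutation probabilities along that word; your Step~3 also usefully makes explicit what the paper's three-line proof uses silently, namely that each child enters the population, which the unconditional ``replace the worst non-elite member'' step of \cref{alg:executor} guarantees whenever a non-elite slot exists. Drop the ``fallback'' route, though: if worse children may be discarded, walking \(n-1\) transpositions from a survivor still needs every intermediate to survive, and the claim can then fail outright (a population of copies of a strict swap-local optimum, all of whose transposition neighbours are worse, never changes under a \((\mu+1)\)-style discard rule), so insertion semantics is a necessary hypothesis rather than one optional reading.
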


\begin{proof}
Every permutation can be transformed into every other by a finite sequence of transpositions.
The stated mutation and selection conditions assign positive probability to each step of such a
sequence. Their finite product is positive.
\end{proof}

\begin{theorem}[Almost-sure optimum discovery]
\label{thm:convergence-v2}
Under the conditions of \cref{prop:reachability-v2}, an elitist executor run without a finite
budget discovers a global optimum with probability one and never loses the best observed fitness.
\end{theorem}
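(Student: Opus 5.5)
The plan is to prove the two claims separately: elitism gives the monotonicity claim, and a uniform geometric bound built from \cref{prop:reachability-v2} gives discovery with probability one.

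\emph{Elitist monotonicity.} Let \(f_t\) denote the best fitness held in the cache or population after step \(t\). \Cref{alg:executor} only replaces the worst non-elite member and always keeps the best-so-far ordering. Hence \(f_{t+1}\ge f_t\) holds surely, on every sample path, and the best observed fitness is never lost. This step needs no probabilistic argument.

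\emph{A uniform reachability bound.} Fix a global optimum \(\pi^*\in\Sn\). Any permutation can be written as a product of at most \(n-1\) transpositions. So from any population state, \(\pi^*\) can be proposed within \(L=n-1\) steps along a path that uses one chosen member as the successive parent. I would carry out the following steps in order.
\begin{enumerate}
\item Let \(p_{\min}\) be the minimum proposal probability of a transposition under the mutation kernel. For each population size \(m\), let \(s_{\min}\) be the minimum probability that selection picks a given member. Both are positive by hypothesis.
\item Since \(\Sn\) is finite, there are finitely many population states. Under the stationarity assumption introduced in the next paragraph, both minima are attained.
\item Conclude that there is a constant \(\delta=(s_{\min}p_{\min})^{L}>0\) such that, from any state, the probability of proposing \(\pi^*\) within the next \(L\) steps is at least \(\delta\).
\end{enumerate}

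There is one subtlety in the second step. After its first mutation, the chosen member must survive long enough to be reselected. Under replace-worst-non-elite, the child is inserted into the population, so the path should follow the child. I would phrase the chain as: select the most recently inserted individual and mutate it by the next transposition. Its survival is then immediate for one step, because the child was just inserted and only the worst non-elite is removed at the next step. If that is not guaranteed, I would instead require the path to pass through the elite slot: alternatively, use that the elite is always retained, so reaching \(\pi^*\) from the elite in \(L\) steps suffices provided each intermediate child becomes the next parent. Either way the hypothesis ``selection can choose every current member'' gives the positive factor.

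\emph{Geometric tail.} Partition time into blocks of length \(L\). Conditioned on any history in which \(\pi^*\) has not yet been proposed, the block succeeds with probability at least \(\delta\), by the Markov property of the executor state. The probability of \(k\) consecutive failures is therefore at most \((1-\delta)^k\to0\). So \(\pi^*\) is proposed, hence evaluated and cached, almost surely. Combined with elitism, the best observed fitness equals the global optimum from that time onward.

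\emph{Main obstacle.} The hardest point is making the bound \(\delta\) uniform in the state. The hypotheses of \cref{prop:reachability-v2} give positivity step by step, but not uniformity. I would resolve this by finiteness: the state space of the executor is finite, and the transition probabilities are time-homogeneous (a fixed operator and fixed tournament size), so the minimum over states of a positive quantity is positive. Cache effects do not break this, because caching changes only the evaluation count, not which candidates are proposed. If time-inhomogeneous operator schedules were allowed, I would assume a positive lower bound \(\inf_t p_{\min}(t)>0\), which is the standard condition in Rudolph's argument~\cite{rudolph1994convergence}.
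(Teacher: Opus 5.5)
Your proposal is correct and follows essentially the paper's own route: a finite positive-probability path to an optimum, a uniform lower bound over blocks from finiteness of the state space, geometric decay of the probability of failing every block, and elitism so the best fitness is never lost. The paper takes the path directly from \cref{prop:reachability-v2}, whereas you rebuild it with block length $n-1$, and your explicit $\delta=(s_{\min}p_{\min})^{n-1}$ and time-homogeneity caveat make concrete what the paper leaves as ``finiteness gives a positive lower bound''. One small correction: your elite-slot fallback is circular, and you do not need it, because the child-following path already works for \cref{alg:executor}, where each child replaces the worst non-elite member.
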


\begin{proof}
From any non-optimal state there is a finite positive-probability path to an optimum. Finiteness
gives a positive lower bound on reaching an optimum over sufficiently long blocks. The probability
of avoiding all such blocks tends to zero. Elitism preserves the best fitness after discovery.
\end{proof}

The guarantee is qualitative. A direct worst-case bound scales with the number of permutations
and is therefore factorial; it does not explain the empirical \(n^3\log n\)-shaped fits observed
on matched synthetic landscapes through \(n=16\). Those fits are conditional evidence about
particular structures and moves, not a universal runtime theorem.

For aging replacement, the incumbent optimum can leave the active population. Nevertheless, the
best-ever record still converges almost surely under the same reachability conditions. This is why
the paper distinguishes active-population convergence from record convergence and makes all
finite-budget claims empirical.

%% file: sections/b_additional_results.tex
\section{Additional Statistical Record}
\label{app:statistics}

\begin{table}[tbp]
\centering
\small
\caption{Headline comparisons and uncertainty statements retained in the short paper. Bootstrap
entries are percentile intervals over the archived replay units; hit-rate entries use Wilson score
intervals for binomial proportions.}
\label{tab:ci-record-v2}
\begin{tabularx}{\linewidth}{L{3.4cm}L{3.4cm}L{3.4cm}Y}
\toprule
\textbf{Comparison} & \textbf{Method A} & \textbf{Method B} & \textbf{Interpretation} \\
\midrule
Parity optimum hits, 40 runs & elitist 19/40 [0.33,0.63] & random 30/40 [0.60,0.86] & no evolutionary advantage \\
Six-instruction evaluations to optimum & evolution 9.9 [7,13] & random 10.8 [8,14] & intervals overlap \\
MATH-500 final best-found accuracy & guided 0.407 [0.367,0.447] & random 0.397 [0.367,0.433] & random-competitive call holds \\
OPRO ordering search & evolution 0.500 [0.467,0.527] & OPRO 0.507 [0.480,0.540] & difference spans zero \\
Wording level change & +0.032 [0.013,0.052] & SD ratio 0.94 & wording helps; ordering remains \\
\bottomrule
\end{tabularx}
\end{table}

\subsection{Forecast ledger}

\begin{table}[tbp]
\centering
\scriptsize
\caption{Numerical forecast ledger. Correct, mixed, tie, and null outcomes remain in the record.}
\label{tab:ledger-v2}
\begin{tabularx}{0.98\linewidth}{L{2.25cm}L{3.25cm}Y L{1.35cm}}
\toprule
\textbf{Study} & \textbf{Recorded call} & \textbf{Outcome} & \textbf{Verdict} \\
\midrule
Six instructions & insertion; guiding structure & insertion signal; search/sampling intervals overlap & scoped \\
Eight modules & sampling competitive & both methods reach 1.0 at every budget & correct \\
Cross-family & source top orderings transfer & positive effect on all three targets; exact ranks vary & mixed \\
MATH-500 & sampling competitive & guided, evolutionary, and random intervals overlap & correct \\
Wording and search & order survives rewriting; search comparison uncertain & SD ratio 0.94; final difference spans zero & correct \\
Pipelines and NAS & cautious pre-search calls & pipeline distance under-calls; NAS 13/18 & partial \\
\bottomrule
\end{tabularx}
\end{table}

\subsection{Selector reliability}

At equal pre-flight budgets, pair-sampled \(\rhoone\) is more reliable than random-walk
autocorrelation or negative mean absolute fitness change. Averaged across typed synthetic and the
enumerated instruction landscape over 200 resamples per cell, selector accuracies are:
\begin{center}
\begin{tabular}{lccc}
\toprule
\textbf{Estimator} & \(B=50\) & \(B=100\) & \(B=500\) \\
\midrule
Pair-sampled \(\rhoone\) & \textbf{0.831} & \textbf{0.925} & \textbf{0.998} \\
Random-walk \(\rhoone\) & 0.734 & 0.848 & 0.992 \\
Negative mean absolute change & 0.827 & 0.915 & 0.991 \\
\bottomrule
\end{tabular}
\end{center}

\subsection{Pipeline and architecture-search details}

The six pipeline systems comprise damped and cubic oscillators, Van der Pol, Lotka--Volterra,
Lorenz-63, and R\"{o}ssler dynamics. Fitness ranges under order alone are 0.40--1.00, 0.70--1.00,
0.31--0.88, 0.00--1.00, 0.89--0.99, and 0.37--0.81, respectively. Evolutionary-versus-random
hit counts over 40 runs are 21/16, 34/30, 40/20, 40/37, 40/23, and 40/14.

The architecture-search forecast matrix contains three HIGH and three LOW slices per dataset.
CIFAR-100 and ImageNet16-120 are correct in all six cells. On CIFAR-10-valid, one HIGH call is
correct, two HIGH calls miss, and all three LOW calls are conservative misses because search wins.
Thus 13 of 18 registered forecasts are correct, and every LOW miss errs toward underusing rather
than overselling search.

\subsection{Completed experiment coverage}

\Cref{tab:coverage-v2} maps the complete research program into experiment families. The short main
text emphasizes the results that determine the current claim; the archive retains per-run data,
negative redesigns, corrections, and implementation details.

\begin{table}[tbp]
\centering
\scriptsize
\caption{Coverage of completed experiment families through August 3, 2026.}
\label{tab:coverage-v2}
\begin{tabularx}{\linewidth}{L{3.1cm}L{3.8cm}Y}
\toprule
\textbf{Experiment family} & \textbf{Included studies} & \textbf{Retained conclusion} \\
\midrule
Reproduction and benchmark repair & Historical neural toys; failed deep-stack redesign; residual,
seeded v4 benchmarks & Headlines reproduce with corrections; deterministic exact ground truth
replaces noisy best-ever fitness. \\
Operator and landscape science & Typed synthetic landscapes; deceptive objective; uniform and
adaptive portfolios & Operator matching is necessary on plateau-rich landscapes; portfolio removes
configuration failure at bounded observed cost; deception remains a limit. \\
Exact neural search tests & Enumerated XOR and parity landscapes from 120 to 5,040 orderings & Exact
hit rate and regret expose both success and the pivotal random-sampling falsification. \\
Diagnosis and method selection & Locality study; aging and restart replacement; precedence
surrogate; 40-seed audit; \(\rhoone\) ablation & Measurements select operators, replacement, and
representation; pair-sampled \(\rhoone\) is the most reliable tested low-budget selector. \\
Language-model ordering & Exhaustive and sampled instruction landscapes; size and family transfer;
guided evaluation; MATH-500; OPRO rewriting and search & Ordering has a large structured effect,
transfers imperfectly, survives wording optimization, and does not guarantee a search-speed win. \\
Cross-domain calibration & Six-system scientific-pipeline suite; 18 tabular NAS forecasts; two
lightweight RL ordering pilots & Scientific and NAS studies calibrate the protocol beyond
prompting; RL shows preliminary executor transfer but has not yet passed a registered pre-flight. \\
\bottomrule
\end{tabularx}
\end{table}

%% file: sections/c_reproducibility.tex
\section{Reproducibility, Resource Accounting, and Data Schema}
\label{app:reproducibility}

All externally hosted model evaluations use frozen question pools, deterministic decoding where
supported, canonical ordering identifiers, and append-only response records. Repeated orderings are
resolved from the archived record and do not count as new distinct evaluations. Recorded
external-model inference expenditure across the program remains below US\$20; the synthetic,
neural, scientific-pipeline, reinforcement-learning, and tabular architecture studies use CPU
execution.

\begin{table}[tbp]
\centering
\small
\caption{Resource accounting for externally hosted language-model studies. Expenditure is reported
for reproducibility and is not used as a performance metric.}
\begin{tabularx}{0.96\linewidth}{L{3.0cm}L{3.0cm}L{2.0cm}Y}
\toprule
\textbf{Study surface} & \textbf{Evaluation coverage} & \textbf{Observed cost} &
\textbf{Reproducibility record} \\
\midrule
Six instruction modules & \(720\times32\approx23{,}000\) responses & US\$3--5 & exhaustive
ordering-by-question table \\
Eight instruction modules & 23,805 model evaluations & US\$5.56 & fixed sampled pool with
per-response records \\
Cross-family transfer & 11,955 model evaluations & US\$0.66 & shared prompts, orderings, and
question set across three families \\
MATH-500 & 31,998 model evaluations & US\$4.43 & pilot-selected model, frozen pool, and registered
forecast \\
Wording and search baselines & 12,724 target evaluations & US\$2.68 & held-out scoring and shared
evaluation budgets \\
\bottomrule
\end{tabularx}
\end{table}

The data schema preserves one row per unique ordering with a stable study identifier, canonical
ordering key, component list, fitness statistic, sample provenance, evaluator version, exact-space
status, and resource record. Derived neighbourhood tables retain parent and child orderings, the
move operator, both scores, and the sampling seed; distance tables declare the metric and whether
the target is an exact optimum or a frozen proxy.

The canonical project repository, including the paper source, experiment summaries, data
manifests, and regeneration instructions, is available at \archiveurl.